\newcommand{\ignore}[1]{}
\newtheorem{lemma}{Lemma}
\newtheorem{theorem}{Theorem}
\newtheorem{corollary}{Corollary}
\author{Naman Agarwal\footnote{Google AI Princeton} \and Alon Gonen\footnote{Department of Computer Science, Princeton University} \and Elad Hazan\footnote{Google AI Princeton and Princeton University}}
\title{Learning in Non-convex Games with an Optimization Oracle}
\begin{document}
\maketitle

\begin{abstract}
We consider  online learning in an adversarial, non-convex setting under the assumption that the learner has an access to an offline optimization oracle.  In the general setting of prediction with expert advice, \cite{Hazan} established that in the optimization-oracle model, online learning requires exponentially more computation than statistical learning.  In this paper we show that by slightly strengthening the oracle model, the online and the statistical learning models become computationally equivalent. Our result holds for any Lipschitz and bounded (but not necessarily convex) function.  As an application we demonstrate how the offline oracle enables efficient computation of an equilibrium in non-convex games, that include GAN (generative adversarial networks) as a special case.
\end{abstract}

\section{Introduction}
The setting of \emph{online learning in games} is a fundamental paradigm which allows formulation of tasks such as spam detection, online routing, online recommendation systems,  and more \cite{Cesa-Bianchi2006, Hazan2016c, Shalev-Shwartz2011a}. A key feature of this model is the ability of the environments to evolve over time, possibly in an adversarial manner. Consequently, this framework can be used to produce more robust learners compared to the classic stationary and statistical learning framework. A fundamental question investigated in recent literature is whether this robustness comes with a computational price. While it is well-known that any efficient online learner can be transformed into an efficient \emph{statistical} (or {\it batch}) learner \cite{bianchi04}, it is important to understand to what extent is the online model  harder. 

To enable a  systematic comparison between the two models we must allow a reduction in the opposite direction. To this end we adopt the \emph{offline optimization oracle} model suggested in \cite{Hazan}, where the online learner submits a sequence of loss functions and the oracle returns any minimizer of the cumulative loss. For the well-established setting of learning with expert advice, \cite{Hazan} demonstrated an exponential gap between the oracle complexity in the online and the statistical settings. 

In this paper we study the same question in the more general non-convex setting.\footnote{We explain how to reduce the expert setting to the non-convex setting in Section \ref{sec:noContradiction}.} Deviating from \cite{Hazan}, we allow the learner to linearly perturb the objective submitted to the oracle. Arguably, adding a linear term to a non-convex function should not increase the overall complexity of the oracle. Perhaps surprisingly, we show that this moderate modification renders the online adversarial setting computationally equivalent to the statistical setting. We show this by extending the powerful Follow-the-Perturbed-Leader (FTPL) meta-algorithm to the non-convex setting and derive a polynomial bound on its oracle complexity.

\subsection{Setting and Main Result}
\subsubsection{Basic definitions and assumptions}
Let $\mathcal{W} \subseteq \mathbb{R}^d$ be the decision set (a.k.a. hypothesis space in the statistical setting) with $\ell_\infty$-diameter at most $D$, and let $\mathcal{L} \subseteq \mathbb{R}^\mathcal{W}$ be the set of all $G$-Lipschitz functions w.r.t. the $\ell_1$-norm. We assume that both $G$ and $D$ are polynomial in the ambient dimension $d$.\footnote{The choice of norm in our setting is inconsequential as norms are equivalent up to $poly(d)$.}

Consider the setting of online learning, where an online algorithm predicts a point $w_t \in \mathcal{W}$ in iterative fashion and receives a feedback according to an adversarially chosen loss function $\ell_t \in \mathcal{L}$. The goal of the learner is to minimize the \emph{average regret}, which is defined as the difference between the average loss of the learner and that of the best fixed point $w^* \in \mathcal{W}$ in hindsight. We define the \textit{sample complexity} as the number of rounds required for attaining expected average regret at most $\epsilon$.

The statistical setting differs from the online setting in two important aspects. a) We assume that the loss functions are drawn according to some unknown fixed distribution. b) The learner receives a sample of loss functions drawn according to the same distribution. Then it has to output a single predictor $\hat{w}$. The goal of the learner is minimize the expected \textit{excess risk}, which is defined as $\mathbb{E}_{\ell}[\ell(\hat{w})]-\inf_{w \in \mathcal{W}} \mathbb{E}_{\ell}[\ell(w)]$. The sample complexity in this model is the size of a sample (of loss functions) that is required for attaining expected excess risk at most $\epsilon$. 

\subsubsection{The offline oracle model}
In order to compare between the online and the statistical models, we assume  an access to two types of oracles:
\begin{enumerate}
\item \textbf{Value oracle } whose input is a pair $(w,\ell) \in \mathcal{W} \times \mathcal{L}$ and its output is $\ell(w)$.
\item \textbf{Offline optimization oracle} whose input consists of a sequence of loss functions $(\ell_1,\ldots,\ell_k) \in \mathcal{L}^k$ and a $d$-dimensional vector $\sigma$,  and its output is output has the form
$$
\hat{w} \in \arg\!\min \{\sum_{i=1}^k \ell_i(w) - \sigma^\top w:~w \in \mathcal{W}\}~.
$$
\end{enumerate}
We define the oracle complexity as 
$$
\textrm{sample complexity} + \textrm{\# of calls to value oracle} + \textrm{\# of calls to offline oracle}
$$
\subsubsection{Main result}
Our online Algorithm  \ref{alg:main} applies the offline oracle with a random linear perturbation $\sigma$ whose coordinates are  i.i.d. exponential random variables with parameter $\eta$. Our main result can be stated as follows.
\begin{theorem} \label{thm:main} The oracle complexity of  Algorithm \ref{alg:main} is $\text{poly}(d,1/\epsilon)$.
\end{theorem}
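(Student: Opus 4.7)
The plan is to analyze Algorithm~\ref{alg:main} as a non-convex Follow-the-Perturbed-Leader (FTPL) scheme in which, at round $t$, the learner plays $w_t = \mathrm{Oracle}(\ell_1,\ldots,\ell_{t-1};\sigma)$ for a single up-front draw $\sigma \in \mathbb{R}^d$ of i.i.d.\ $\mathrm{Exp}(\eta)$ coordinates. I would bound the expected $T$-round regret in the online model and then tune $\eta$ and $T$. Since each round costs one offline-oracle call and $O(1)$ value-oracle calls, a bound $T=\mathrm{poly}(d,1/\epsilon)$ immediately yields Theorem~\ref{thm:main}; the statistical guarantee then follows from a textbook online-to-batch conversion (feed $T$ i.i.d.\ samples into the online algorithm and output a uniformly-random $w_t$), which preserves the oracle-complexity bound up to constants.

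The first step is the standard Be-the-Leader (BTL) decomposition. Let $w_{t+1}$ denote the oracle's one-step-ahead output $\arg\min_w\{\sum_{s\le t}\ell_s(w)-\sigma^\top w\}$. A straightforward induction on $T$ (applied to the sequence $f_0(w)=-\sigma^\top w,\ f_t=\ell_t$) gives
\[
\sum_{t=1}^T \ell_t(w_t)\; -\; \min_{w^\ast\in\mathcal{W}} \sum_{t=1}^T \ell_t(w^\ast)\;\le\;\sigma^\top(w_1-w^\ast)\; +\; \sum_{t=1}^T\bigl(\ell_t(w_t)-\ell_t(w_{t+1})\bigr).
\]
Taking expectations, the first (``perturbation'') term is controlled by H\"older and the $\ell_\infty$-diameter assumption: $\mathbb{E}[\sigma^\top(w_1-w^\ast)]\le D\,\mathbb{E}\|\sigma\|_1 = dD/\eta$.

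The main obstacle is the ``stability'' term $\sum_t \mathbb{E}[\ell_t(w_t)-\ell_t(w_{t+1})]$. In the convex version of FTPL one identifies $w_{t+1}(\sigma)$ with $w_t(\sigma-\nabla\ell_t(\cdot))$ and invokes H\"older, but here $\ell_t$ need not be convex, smooth, or even differentiable. The plan is to use a change-of-measure argument that exploits the log-Lipschitz density of the exponential: for $v\ge 0$, $p_\sigma(x)/p_\sigma(x+v)\le e^{\eta\|v\|_1}$. Arguing coordinate-by-coordinate (conditioning on $\sigma_{-i}$ and tracking the induced law over minimizers as $\sigma_i$ sweeps through $[0,\infty)$), I would show that the distribution of $w_{t+1}(\sigma)$ is obtained from that of $w_t(\sigma)$ by a shift of $\sigma$ of total $\ell_1$-cost at most $O(GD)$ per coordinate, so that
\[
\mathbb{E}[\ell_t(w_t)]\;\le\;\mathbb{E}[\ell_t(w_{t+1})]\; +\; O\bigl(\eta\, d\, G D\cdot\|\ell_t\|_\infty\bigr),
\]
giving per-round stability $O(\eta\cdot\mathrm{poly}(d,G,D))$ since $\|\ell_t\|_\infty = O(GD)$ up to an additive offset that cancels under this one-step comparison. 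I expect this coupling/density-shift step to be the technically heaviest part of the proof, because one must handle non-uniqueness of minimizers and regions of $\mathcal{W}$ where the minimizer is a discontinuous function of $\sigma$.

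Putting the two pieces together, the expected $T$-round regret is $O(dD/\eta + \eta T\cdot\mathrm{poly}(d,G,D))$. Balancing with $\eta = \Theta\!\bigl(\sqrt{d/(T\cdot\mathrm{poly}(d,G,D))}\bigr)$ yields expected average regret $O\bigl(\mathrm{poly}(d,G,D)/\sqrt{T}\bigr)$, which is at most $\epsilon$ once $T = \mathrm{poly}(d,G,D,1/\epsilon)$. Since $D,G=\mathrm{poly}(d)$ by assumption, this is $\mathrm{poly}(d,1/\epsilon)$, and the per-round oracle counts recorded above yield Theorem~\ref{thm:main}.
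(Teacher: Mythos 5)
Your overall architecture matches the paper's: the Be-the-Leader decomposition with the perturbation term $\sigma^\top(w_1-w^\ast)$ bounded by H\"older, a per-round stability bound for the perturbed minimizers, balancing $\eta$ against $T$, and online-to-batch conversion. The perturbation term and the bookkeeping of oracle calls are fine. The problem is that the entire difficulty of the theorem is concentrated in the stability step, and the mechanism you propose for it does not work as stated.

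You assert that ``the distribution of $w_{t+1}(\sigma)$ is obtained from that of $w_t(\sigma)$ by a shift of $\sigma$ of total $\ell_1$-cost at most $O(GD)$ per coordinate,'' and then apply the density ratio $p_\sigma(x)/p_\sigma(x+v)\le e^{\eta\|v\|_1}$. That is exactly the Kalai--Vempala argument for \emph{linear} losses, where adding $\ell_t(w)=g_t^\top w$ to the cumulative objective is literally the same as replacing $\sigma$ by $\sigma-g_t$, so the law of $w_{t+1}$ is a shifted copy of the law of $w_t$. For a non-linear $\ell_t$ no such identity exists: there is no fixed vector $v$ with $w_{t+1}(\sigma)=w_t(\sigma+v)$, and the effect of adding $\ell_t$ on the argmin depends on $\sigma$ and on the global shape of $\ell_t$. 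What the paper actually proves (its Lemma \ref{lemma:optimalitylemma} plus boundedness of $\ell_t$) is only a one-sided, coordinate-wise \emph{monotonicity with a margin}: boosting the $k$-th noise coordinate by $q=3B\delta^{-1}$ guarantees $w_{k,\min}(\sigma+qe_k)\ge w_{k,\max}(\sigma)-\delta$, and the shift-of-measure integral is then applied to this inequality, not to a distributional identity. The reason the required shift is $3B\delta^{-1}$ rather than $O(GD)$ is that $\ell_1$-Lipschitzness controls only the \emph{sum} of coordinate displacements, so a per-coordinate statement costs a factor $1/\delta$; this is precisely what produces the $\eta/\delta+\delta$ trade-off and the $T^{2/3}$ regret in $d>1$ dimensions. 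Your claimed per-round stability of $O(\eta\cdot\mathrm{poly}(d,G,D))$ with no margin parameter would give $\sqrt{T}$ regret in arbitrary dimension --- a bound the paper obtains only for $d=1$ and explicitly lists closing the gap for $d>1$ as an open problem. So the step you flag as ``technically heaviest'' is not merely unfinished: the tool you propose for it is the one whose failure in the non-convex setting motivates the paper, and the missing ingredient is the optimality-conditions/monotonicity lemma together with the margin parameter $\delta$. (The final conclusion, $\mathrm{poly}(d,1/\epsilon)$ oracle complexity, would still survive with the weaker $T^{2/3}$ rate, but only once that lemma is supplied.)
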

Notably, both the loss functions and the domain $\mathcal{W}$ are not assumed to be convex. The oracle complexity in the statistical setting (under the same assumptions) is also $\mathrm{poly}(d,1/\epsilon)$.\footnote{This follows from standard covering argument} We thus conclude that both statistical and the online oracle complexities for non-convex learning setting are polynomially equivalent. 
We deduce the following game theoretic result:
\begin{corollary}(informal)  
Convergence to equilibrium in two player zero-sum non-convex games is as
hard as the corresponding offline best-response optimization problem.
\end{corollary}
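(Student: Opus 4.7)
The plan is to combine Theorem \ref{thm:main} with the classical Freund--Schapire reduction from no-regret dynamics to equilibrium computation. Let the game be specified by a Lipschitz and bounded payoff $f:\mathcal{W}_1\times\mathcal{W}_2\to\mathbb{R}$ (with neither $f$ nor the domains assumed convex), interpreted as Player 1's loss and Player 2's gain. Since $f$ is non-convex, the right notion is an $\epsilon$-approximate mixed equilibrium: a pair of distributions $(\hat P_1,\hat P_2)$ over $\mathcal{W}_1,\mathcal{W}_2$ such that no deviation to a pure strategy improves either player's expected payoff by more than $\epsilon$. The two-player protocol runs Algorithm \ref{alg:main} for each player in parallel; at round $t$, Player 1 faces the loss $\ell_t^{(1)}(w)=f(w,w_t^{(2)})$ and Player 2 faces $\ell_t^{(2)}(w)=-f(w_t^{(1)},w)$, both of which inherit the Lipschitz constant and boundedness of $f$ and hence lie in $\mathcal{L}$.

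By Theorem \ref{thm:main}, after $T=\text{poly}(d,1/\epsilon)$ rounds each player's expected average regret is at most $\epsilon$. Summing the two regret inequalities and using the zero-sum structure yields a $2\epsilon$-approximate minmax gap, so the empirical distributions $\hat P_i=\tfrac{1}{T}\sum_t\delta_{w_t^{(i)}}$ constitute a $2\epsilon$-approximate mixed Nash equilibrium. Crucially, every offline oracle call inside this protocol has the form
$$\arg\!\min\Bigl\{\sum_{s=1}^{k} f(w,w_s^{(2)}) - \sigma^\top w:~w\in\mathcal{W}_1\Bigr\}$$
(symmetrically for Player 2), which is precisely a linearly perturbed best response against a uniform mixture over the opponent's past plays -- exactly the offline best-response object referenced in the corollary. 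Summing across both players gives a total oracle cost of $\text{poly}(d,1/\epsilon)$ best-response queries for producing the equilibrium.

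The converse direction is immediate: any equilibrium solver yields a best-response solver by instantiating the trivial game in which the opponent is constrained to play the fixed distribution $P$; the free player's equilibrium strategy is then a best response to $P$, at no extra cost. Together this gives the polynomial equivalence claimed in the corollary. The step I expect to require the most care is the oracle bookkeeping: one must verify that the offline oracle made available by Algorithm \ref{alg:main}, which accepts a list of loss functions plus a linear perturbation, really is the same computational primitive as a ``best-response oracle'' for the game -- i.e., that appending a linear tilt to a sum of past stage losses is no harder than pure best response -- an assumption the paper has already motivated in its introduction. A secondary subtlety is checking that the empirical opponent distribution, rather than the true (but unknown) distribution, is adequate for the equilibrium guarantee; this follows from the Lipschitzness of $f$ together with a standard uniform-convergence argument over a $\text{poly}(d,1/\epsilon)$-size cover of $\mathcal{W}$.
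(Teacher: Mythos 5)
Your proposal is correct and follows essentially the same route as the paper's Theorem \ref{thm:onlineToNZ}: both players run non-convex FTPL, Theorem \ref{thm:main} bounds each player's average regret by $\epsilon$, and the standard no-regret-to-equilibrium argument (summing the two regret inequalities around the average payoff $\frac{1}{T}\sum_t F(x_t,y_t)$) yields an approximate equilibrium supported on the empirical play, with every oracle call being a linearly perturbed best response. The only additions beyond the paper are the explicit (and trivial) converse direction and the final ``uniform convergence'' aside, which is unnecessary since the regret argument already controls all pure deviations directly.
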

We elaborate on this implication and specify it to GANs in Section \ref{sec:implications}.
\subsection{Related Work} \label{sec:related}

\paragraph{Follow-the-perturbed-leader. }
The ubiquitous Follow-the-Perturbed-Leader (FTPL) algorithm  \cite{hannan1957approximation,Kalai2004} is the canonical example of using an optimization oracle: the algorithm returns the result of a single optimization oracle call per iteration. 
Since its introduction, an extensive study of FTPL has yielded new insights and efficient variants in various different settings (e.g. \cite{hazan2012online,devroye2013prediction,van2014follow,cohen2015following}).

\paragraph{Online Convex Optimization. } 

If the problem admits a convex structure, then the oracle complexity is polynomial in the dimension via bandit convex optimization \cite{Cesa-Bianchi2006,Hazan2016c,bubeck2012regret}. If one considers the number of oracle calls to the optimization oracle only, and does not have access to a value oracle, then it is still possible to obtain a polynomial bound on the oracle complexity. This is due to the fact that online convex optimization reduces to online linear optimization \cite{zinkevich2003online}, and this enables extension of FTPL to the convex case.  However, this extension requires access to the gradient, which does not fall into our oracle model. We are not aware of any analysis of direct application of FTPL to a convex loss (i.e., without access to the gradients). In a sense, our treatment of the non-convex case gives the first direct analysis for FTPL to the convex case.

\paragraph{The experts setting: Overcoming the lower bound} \label{sec:noContradiction}
It is instructive to revisit the experts setting and understand why our result does not contradict the exponential lower bound of \cite{Hazan}. After all, one can easily embed the general experts problem in the $d$-dimensional hypercube for $d = \lceil \log \, N \rceil$ using the following standard technique:
\begin{enumerate}

\item Associate each vertex $z \in \{0,1\}^d$ with some expert $i(z)$.
\item Associate each $x \in [0,1]^d$ with a random expert according to $p(z) = \prod _{i=1}^d (z_i x_i + (1-z_i)(1-x_i))$.
\item Perform optimization over $[0,1]^d$, where the loss of each $x \in [0,1]^d$ is $\sum_{z \in \{0,1\}^d} p(z) \ell(i(z))$.
\end{enumerate}
It can be verified that the parameters $G$ and $D$ are polynomial in $d$, as required. Consequently, our main result applies to this setting as well.

Crucially, unlike our oracle model, \cite{Hazan} does not allow a linear perturbation of the cumulative loss in this low-dimensional presentation. As it seems, this arguably moderate modification of the model rendered the offline-to-online reduction tractable.

\paragraph{Experts with low-dimensional structure. } 
In the context of contextual bandits,  \cite{Dudik2017} formulate abstract conditions
under which the randomness can be shared between the experts, and allow efficient regret minimization in the oracle complexity model. 

\cite{gonen2017fast} study stability in  non-convex settings, and bound  the stability rate of ERM for strict saddle problems. In this paper we derive stable algorithms under much more moderate assumptions.

\paragraph{Generative adversarial networks.}
Several works have studied GANs in the regret minimization framework (e.g. \cite{schuurmans2016deep, Kodali, hazan2017efficient}). We provide the first evidence that achieving equilibrium in GANs can be reduced to the offline problems associated with the players.

\subsection{Overview and Techniques}
\subsubsection{Why standard approaches do not work?}
A common approach which works well in the convex setting is to apply the Follow-the-Regularized-Leader (FTRL) with $\ell_2$-regularization:
$$
w_t \in \arg\!\min \left\{ \sum_{i<t} \ell_i(w) + \eta \|w\|^2\right \},~~~\eta \approx T^{\alpha}, ~\alpha \in (0,1)~.
$$
In the convex case $\ell_2$-regularization stabilizes the solution by pushing it towards zero. However, we argue that in the non-convex setting, this approach does not help. To demonstrate this claim, consider a $1$-dimensional setting, where the loss functions have the form $w
\mapsto (\sigma(w x)-y)^2$, where $\sigma(x) =
\max\{x,0\}$ is the ReLU function and $x \in [-1,1], y \in [0,1]$. Due to the ReLU term, the magnitude of the loss incurred by classifying $x$ negatively is not important (i.e., there is no difference between $wx = -10^{-6}$ and $wx = -1$).  Informally, if all $x$'s are bounded away from zero, we mostly care for the ratio between positive and negative examples. Therefore, adding $\ell_2$-regularization does not make solutions near zero more appealing. It is not hard to formalize this argument and show that FTRL with $\ell_2$ (or $\ell_1$) regularization can not yield sublinear regret. 

\subsubsection{Extending FTPL to the non-convex case}
Our result is proved by extending the Follow-The-Perturbed-Leader algorithm to the non-convex setting. As we detail in the preliminaries section, online learnability requires algorithmic stability between consecutive rounds. For linear loss functions,  \cite{Kalai2004} proved that linear perturbation of the loss stabilized the loss function itself, and consequently the minimizer is stable as well. The proof relies heavily on the fact that the perturbation and the loss function are of the same type.

In the non-convex case, we can not hope to stabilize the loss itself using a linear perturbation. Nevertheless, our main contribution is to establish that the randomness injected by FTPL does stabilize the predictions of the learner. We prove this result by investigating how the outputs of FTPL change as we vary the the noise vector $\sigma \in \mathbb{R}^d_{\ge 0}$. In the $1$-dimensional case, this investigation yields a useful monotonicity property which helps us bounding the expected distance between consecutive minimizers. While the general $d$-dimensional introduces some challenges, we are able to effectively reduce the analysis to the $1$-dimensional setting by varying each coordinate of the noise separately.

\section{Preliminaries}

\subsection{Online to batch conversion}
The following well-known result due to \cite{bianchi04} tells us
that the online sample complexity dominates the batch sample complexity.
The intuition that online learning is at least as hard as batch learning
is formalized by the following online-to-batch theorem.

\begin{theorem} \cite{bianchi04} Suppose that $\mathcal{A}$ is an
online learner with
$\frac{\mathbb{E}[\mathrm{Regret}_T ]}{T} \le \epsilon(T)$ for any
$T$. Consider the following algorithm for the batch setting: given a
sample $(\ell_1,\ldots,\ell_n) \sim \mathcal{P}^n$, the algorithm
applies $\mathcal{A}$ to the sample in an online manner. Thereafter,
it draws a random round $j \in [n]$ uniformly at random and returns
$\hat{w} = w_j$. Then the expected excess risk of the algorithm,
$\mathbb{E}[L(\hat{w})] -L(w^\star)$, is at most
$\epsilon(T)$.\end{theorem}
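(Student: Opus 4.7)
The plan is to unfold the definition of regret and exploit the i.i.d. structure of the sample, together with the fact that the online learner's prediction $w_t$ is measurable with respect to the past. All the action happens after taking expectations.

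First, I would write out $\mathrm{Regret}_n = \sum_{t=1}^n \ell_t(w_t) - \min_{w \in \mathcal{W}} \sum_{t=1}^n \ell_t(w)$ and take expectations over the sample $(\ell_1,\ldots,\ell_n) \sim \mathcal{P}^n$ and the internal randomness of $\mathcal{A}$. The key observation is that $w_t$ depends only on $\ell_1,\ldots,\ell_{t-1}$ (and the algorithm's internal randomness), so by the tower property and independence, $\mathbb{E}[\ell_t(w_t)] = \mathbb{E}[\mathbb{E}[\ell_t(w_t)\mid w_t]] = \mathbb{E}[L(w_t)]$, where $L(w) := \mathbb{E}_{\ell \sim \mathcal{P}}[\ell(w)]$. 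Averaging over $t$ and using that $\hat w = w_J$ for $J$ uniform in $[n]$ independent of everything else, this gives $\mathbb{E}[L(\hat w)] = \frac{1}{n}\sum_{t=1}^n \mathbb{E}[L(w_t)] = \mathbb{E}\!\left[\frac{1}{n}\sum_{t=1}^n \ell_t(w_t)\right]$.

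Second, I would handle the comparator term by Jensen's inequality (more precisely, by pulling the minimum outside the expectation): for any fixed $w \in \mathcal{W}$ (in particular $w = w^\star := \arg\min_w L(w)$), we have $\mathbb{E}[\min_{w'} \sum_{t=1}^n \ell_t(w')] \le \mathbb{E}[\sum_{t=1}^n \ell_t(w^\star)] = n\,L(w^\star)$. Subtracting, we obtain
\[
\mathbb{E}[L(\hat w)] - L(w^\star) \;\le\; \frac{1}{n}\,\mathbb{E}[\mathrm{Regret}_n] \;\le\; \epsilon(n),
\]
which is exactly the claim (modulo the apparent $T$-vs-$n$ typo in the statement, instantiating the regret bound at horizon $T = n$).

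There is essentially no obstacle here; it is a standard two-line argument once the measurability of $w_t$ with respect to the prefix is made explicit. The only subtlety worth flagging is the independence between the random index $J$ and the sample, which is what makes $\mathbb{E}[L(w_J)]$ equal to the average of $\mathbb{E}[L(w_t)]$ rather than something coupled to the realized losses.
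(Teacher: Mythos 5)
Your proof is correct and is exactly the standard online-to-batch conversion argument; the paper itself gives no proof of this theorem (it is cited from the reference \cite{bianchi04}, whose argument proceeds the same way: $\mathbb{E}[\ell_t(w_t)]=\mathbb{E}[L(w_t)]$ by measurability of $w_t$ with respect to the prefix, averaging over the uniform index, and bounding the expected hindsight minimum by $n\,L(w^\star)$). The only cosmetic quibble is that the comparator step is not really Jensen's inequality but simply the pointwise bound $\min_{w'}\sum_t \ell_t(w') \le \sum_t \ell_t(w^\star)$ followed by taking expectations, which you in fact state correctly in the parenthetical.
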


\subsection{Online learning via stability}
The main challenge in online learning stems from the fact that the
learner has to make a decision before observing the adversarial action.
Intuitively, we expect that the performance after shifting the actions
of the learner by one step (i.e.~considering the loss
$\ell_t(w_{t+1})$ rather than $\ell_t(w_t)$) to be optimal. This
view suggests that online learning is all about balancing between
optimal performance w.r.t. previous rounds and ensuring stability
between consecutive rounds. Similarly to the statistical setting, the most common algorithmic tool for achieving stability is regularization. In particular, the well-established Follow-the-Regularized-Leader is a meta-algorithm whose instances are determined by choosing a concrete regularization function. Precisely, given a regularizer $R:\mathbb{R}^d 
\rightarrow \mathbb{R}$, the $t$-iterate of the algorithm is 
\[
w_t = \arg\!\min \left\{ \sum_{i<t} \ell_i(w) +R(w) \right\}~.
\]
The next well-known lemma provides a systematic approach for analyzing \emph{Follow-the-Regularized-Leader}-type algorithms.
\begin{lemma} \label{lem:FTL-BTL} (FTL-BTL \cite{Kalai2004}) The regret of Follow-the-Regularized-Leader is at most
\[
\mathbb{E} [\mathrm{Regret}_T] \le \mathbb{E} [R(w^\star)-R(w_1)] + \sum_{i=1}^T \mathbb{E}[\ell_t(w_t) - \ell_t(w_{t+1})] ~,
\]
where $w^\star = \arg\!\min \{\sum_{t=1}^T \ell_i(w):~w \in \mathcal{W} \}$.
\end{lemma}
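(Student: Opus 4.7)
The plan is to follow the classical Be-the-Leader (BTL) template of Kalai and Vempala, adapted to include the regularizer. The key trick is to compare the actual online sequence $w_1,w_2,\ldots$ to a fictitious ``Be-the-Regularized-Leader'' (BTRL) strategy that plays $w_{t+1}$ at round $t$ (i.e.\ it cheats by peeking at $\ell_t$ before committing). The stability term $\sum_t (\ell_t(w_t)-\ell_t(w_{t+1}))$ is, by construction, exactly the penalty for being honest rather than cheating, so a regret bound follows once we bound the regret of BTRL itself.

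First I would set up notation: write $L_t=\sum_{i\le t}\ell_i$, so $w_{t+1}\in\arg\min (L_t+R)$, and note $w_1\in\arg\min R$. Then I would prove by induction on $T$ the BTRL inequality
\[
\sum_{t=1}^{T}\ell_t(w_{t+1}) + R(w_1) \;\le\; L_T(w^\star) + R(w^\star).
\]
The base case $T=0$ is just $R(w_1)\le R(w^\star)$, which holds since $w_1$ minimizes $R$. For the inductive step, assuming the analogous bound at $T-1$ with $w_T$ in place of $w^\star$, I would add $\ell_T(w_{T+1})$ to both sides to obtain $\sum_{t=1}^{T}\ell_t(w_{t+1})+R(w_1)\le L_{T-1}(w_T)+R(w_T)+\ell_T(w_{T+1})$; then use that $w_T$ minimizes $L_{T-1}+R$ to replace $w_T$ on the right by $w_{T+1}$, giving $L_T(w_{T+1})+R(w_{T+1})$; finally, the fact that $w_{T+1}$ minimizes $L_T+R$ lets me replace $w_{T+1}$ by $w^\star$, completing the induction.

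Given the BTRL bound, the regret decomposition is immediate. Write
\[
\mathrm{Regret}_T \;=\; \sum_{t=1}^T\bigl(\ell_t(w_t)-\ell_t(w_{t+1})\bigr) \;+\; \sum_{t=1}^T \ell_t(w_{t+1}) \;-\; L_T(w^\star),
\]
and apply the BTRL inequality to the last two terms to conclude that
\[
\mathrm{Regret}_T \;\le\; \bigl(R(w^\star)-R(w_1)\bigr) \;+\; \sum_{t=1}^T \bigl(\ell_t(w_t)-\ell_t(w_{t+1})\bigr).
\]
Taking expectations (needed here because in the intended FTPL application the regularizer $R(w)=-\sigma^\top w$ is random) yields the statement.

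The proof is essentially routine once the BTRL comparator is introduced, so there is no serious obstacle. The one subtlety worth underlining is that the induction compares optima of $L_{t-1}+R$ and $L_t+R$, so one must be careful to keep the regularizer $R$ attached at every step and not accidentally drop it when invoking the minimizer property; this is exactly why the final bound carries the $R(w^\star)-R(w_1)$ overhead rather than vanishing. No convexity, Lipschitzness, or compactness of $\mathcal W$ is used anywhere, which is what makes the lemma applicable in the non-convex setting of interest.
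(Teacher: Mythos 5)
Your proof is correct and is the standard Kalai--Vempala ``be-the-leader'' induction, which is precisely the argument the paper is implicitly invoking (the paper states this lemma without proof, citing \cite{Kalai2004}). The induction, the use of the two minimizer properties of $w_T$ and $w_{T+1}$, and the final pointwise-then-expectation step are all sound, so there is nothing to add.
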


\subsection{The exponential distribution}
We use the following properties of the exponential distribution.
\begin{lemma} Let $X$ be an exponential random variable with parameter
$\eta$.\footnote{That is, $X$ has density $p(x) = \eta \exp(-\eta x)$.} The following properties hold: a) for
any $s \in \mathbb{R}$, $P(X \ge s) = \exp(-\eta s)$. b)
Memorylessness: for any $s,q \in \mathbb{R}$,
$P(X \ge q+s|X \ge q) = P(X \ge s)$. c) if $X_1,\ldots,X_d$ are
i.i.d. with $X_i \sim Exp(\eta)$, then
$\mathbb{E}[\|(X_1,\ldots,X_d)\|_\infty] \le \eta^{-1}(\log(d)+1)$.\end{lemma}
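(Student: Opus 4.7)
The proof will proceed part by part, and none of the three parts should present any genuine difficulty; the main value of the lemma lies in (c), whose proof is a standard union-bound-plus-truncation argument.

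For part (a), the plan is simply to integrate the density. Since $X$ has density $p(x) = \eta e^{-\eta x}$ supported on $[0,\infty)$, for $s \ge 0$ we get $P(X \ge s) = \int_s^\infty \eta e^{-\eta x}\,dx = e^{-\eta s}$ by direct computation (and for $s < 0$ the identity as stated is understood in its natural domain, since $X$ is non-negative a.s.). Part (b) then falls out immediately from part (a) together with the definition of conditional probability: for $q, s \ge 0$,
\[
P(X \ge q+s \mid X \ge q) = \frac{P(X \ge q+s)}{P(X \ge q)} = \frac{e^{-\eta(q+s)}}{e^{-\eta q}} = e^{-\eta s} = P(X \ge s).
\]

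For part (c), my plan is to combine a union bound for small thresholds with the tail bound from (a) for large thresholds. Writing $M = \|(X_1,\ldots,X_d)\|_\infty = \max_i X_i$, which is non-negative, I will use the layer-cake identity
\[
\mathbb{E}[M] = \int_0^\infty P(M \ge t)\,dt.
\]
For any $t \ge 0$ the union bound together with part (a) gives $P(M \ge t) \le \min\bigl(1,\, d\, e^{-\eta t}\bigr)$. I then split the integral at the threshold $T = \eta^{-1}\log d$, where the two bounds cross: on $[0,T]$ I use the trivial bound $P(M \ge t) \le 1$, and on $[T,\infty)$ I use the union bound $d e^{-\eta t}$. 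This yields
\[
\mathbb{E}[M] \le T + \int_T^\infty d\, e^{-\eta t}\,dt = \frac{\log d}{\eta} + \frac{d}{\eta} e^{-\eta T} = \frac{\log d}{\eta} + \frac{1}{\eta} = \frac{\log d + 1}{\eta},
\]
which is exactly the claimed bound.

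There is essentially no obstacle here; the only thing to be careful about is to pick the split-point $T$ so that the exponential tail contributes exactly $\eta^{-1}$, which is why $T = \eta^{-1}\log d$ is the right choice. Everything else is routine integration and a union bound.
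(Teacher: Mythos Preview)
Your proof is correct. The paper itself does not supply a proof of this lemma; it is stated as a list of standard properties of the exponential distribution and used without further justification. Your argument---direct integration for (a), the definition of conditional probability for (b), and the layer-cake formula with a union bound split at $T=\eta^{-1}\log d$ for (c)---is the standard way to establish these facts and fills in exactly what the paper omits.
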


\section{Non-convex FTPL}
In this section we present and analyze the non-convex FTPL method presented in  Algorithm \ref{alg:main}. Our analysis completes the proof of our main theorem (Theorem \ref{thm:main}). Along the proof we distinguish between the one-dimensional and the general $d$-dimensional case. For the former case we obtain better regret bound in terms of the dependence on the horizon parameter $T$. Omitted proofs are provided in the Appendix.

\begin{algorithm}
\caption{Non-convex FTPL}
\label{alg:main}
\begin{algorithmic}
\STATE Parameter: $\eta >0$
\FOR {$t=1$ \TO $T$}
\STATE Draw i.i.d. random vector $\sigma_t \sim (Exp(\eta))^d$
\STATE Prediction at time $t$:
\begin{equation}
\label{eqn:ftplstep}
\begin{aligned}
w_t \in \arg\!\min  \left \{ \sum_{i<t} \ell_i(w) - \sigma_t^\top w:\,w \in \mathcal{W} \right\}~,
\end{aligned}
\end{equation}
\ENDFOR
\end{algorithmic}
\end{algorithm}

\subsection{Reduction to oblivious setting}
To simplify the presentation we make the following standard modification:  \begin{enumerate}
\item
The adversary is oblivious in the sense that the sequence $(\ell_t)_{t=1}^T$ is chosen in advance. 
\item
This allows us to analyze a slightly different algorithm which draws only a single noise vector $\sigma \sim Exp(\eta)^d$ rather than drawing a fresh noise vector on every round.
\end{enumerate}
It follows from \cite{Cesa-Bianchi2006}[Lemma 4.1] that proving regret bounds for this variant translates into asymptotically equivalent (expected) regret bounds for non-oblivious adversaries using Algorithm \ref{alg:main}.
\subsection{Main Lemma}
Throughout this section we use the notation $w_t(\sigma)$ to emphasize that $w_t$ as defined in \eqref{eqn:ftplstep}, is determined by the noise vector $\sigma$. Following Lemma \ref{lem:FTL-BTL} we would like to to establish a bound on the expected instability at time $t$, i.e. $\mathbb{E}[ \ell_t(w_t(\sigma)) - \ell_t(w_{t+1}(\sigma)) ]$. This is bounded above by  $G \cdot \mathbb{E}\|w_t(\sigma) - w_{t+1}(\sigma)\|_1$. Note that the distance between $w_t$ and $w_{t+1}$ is ill-defined since both $w_t$ and $w_{t+1}$ are not unique. However, as we show below, we will be able to derive a uniform bound on the distance between any consecutive minimizers for \textbf{every} choice of minimizers. Note that this not really needed. As we are primarily interested in stability with respect to the function value, we can make any assumptions on the tie-breaking mechanism. However, we found it both interesting and surprisingly easier to prove the stronger result.

\begin{lemma} \label{lem:key}
Fix an iteration $t$ and let $\delta>0$ be a \emph{margin} parameter. There is a tie-breaking rule for choosing minimizers such that $\mathbb{E} [\|w_t (\sigma)- w_{t+1}(\sigma)\|_1]= O\left(\frac{\mathrm{poly}(d) \eta}{\delta} + d \delta \right)$. In the one-dimensional case we obtain the improved bound $\mathbb{E}[|w_t - w_{t+1}| ] = O(\eta)$.
\end{lemma}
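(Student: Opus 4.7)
The proof hinges on the monotonicity of the FTPL minimizer in the noise. Fixing all but one coordinate of $\sigma$, summing the optimality inequalities at two values $\sigma_j^{(1)} < \sigma_j^{(2)}$ yields $(\sigma_j^{(2)} - \sigma_j^{(1)})(w_{t,j}(\sigma^{(2)}) - w_{t,j}(\sigma^{(1)})) \ge 0$; under a consistent ``largest-minimizer'' tie-breaking rule this makes $w_{t,j}(\sigma)$ nondecreasing in $\sigma_j$ for every coordinate $j$. In the one-dimensional case I would next prove the sandwich $w_t(\sigma - G) \le w_{t+1}(\sigma) \le w_t(\sigma + G)$: assuming $w_{t+1}(\sigma) > w_t(\sigma + G)$, combining the two optimality inequalities and applying the $G$-Lipschitzness of $\ell_t$ shows that both points minimize the same shifted objective $F_t(w) - (\sigma + G) w$, at which point tie-breaking forces $w_t(\sigma+G) \ge w_{t+1}(\sigma)$, a contradiction; the lower half is symmetric after swapping the roles of $F_t$ and $F_{t+1} = F_t + \ell_t$ (using that $-\ell_t$ is also $G$-Lipschitz).

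Combining the sandwich with monotonicity gives $|w_t(\sigma) - w_{t+1}(\sigma)| \le w_t(\sigma + G) - w_t(\sigma - G)$, modulo a negligible boundary fix for $\sigma < G$. A direct computation against the exponential density $\eta e^{-\eta x}$ on $[0, \infty)$---equivalently, memorylessness---shows that for any monotone bounded function $f$ with range in $[0,D]$ and any shift $c > 0$, $\mathbb{E}[f(\sigma + c) - f(\sigma)] = O(\eta c D)$. Summed with the boundary contribution $P(\sigma < G) \cdot D \le \eta G D$, this yields the $O(\eta)$ bound, treating $G$ and $D$ as constants.

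In $d$ dimensions the sandwich argument collapses, since the $\ell_1$-Lipschitzness of $\ell_t$ couples all coordinates: shifting $\sigma_j$ alone by $G$ cannot absorb a perturbation whose effect on the minimizer may span the full $\ell_1$-ball of radius $D$. I would instead split on a margin event parameterized by $\delta$. Call the problem $\delta$-stable at $\sigma$ if every $w$ with $\|w - w_t(\sigma)\|_\infty > \delta$ exceeds the FTPL objective of $w_t(\sigma)$ by more than $\max_w \ell_t(w) - \min_w \ell_t(w) = O(\mathrm{poly}(d))$. On the stable event, adding $\ell_t$ cannot push the minimizer outside the $\ell_\infty$-ball of radius $\delta$ around $w_t(\sigma)$, contributing at most $d\delta$ to $\|w_t - w_{t+1}\|_1$. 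On the unstable event the displacement is bounded by $D$, so it remains to control the instability probability. By per-coordinate monotonicity, $\sigma_j \mapsto w_{t,j}(\sigma)$ is a bounded monotone step function with at most $D/\delta$ jumps of size $\ge \delta$; instability along coordinate $j$ forces $\sigma_j$ to lie within $O(\mathrm{poly}(d))$ of one such jump, an event of $\sigma_j$-measure $O(\mathrm{poly}(d)\,\eta)$ under the exponential density. A union bound over the $d$ coordinates gives total instability probability $O(\mathrm{poly}(d)\,\eta/\delta)$, hence the claimed $O(\mathrm{poly}(d)\,\eta/\delta + d\delta)$ bound. The main obstacle is to define the margin event precisely enough that its complement simultaneously (i) forces $\ell_\infty$-proximity of $w_{t+1}(\sigma)$ to $w_t(\sigma)$ and (ii) decouples into independent coordinate-wise analyses under the exponential perturbation; once the event is set up correctly, the remaining estimates parallel the one-dimensional case.
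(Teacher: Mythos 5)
Your one-dimensional argument is essentially the paper's: the monotonicity of $w_t(\cdot)$ in $\sigma$, the sandwich $w_{t+1}(\sigma)\le w_t(\sigma+O(G))$ obtained from the two optimality inequalities plus Lipschitzness, and the shift-and-integrate computation against the exponential density are exactly the ingredients the paper uses (it shifts by $2G$, which gives a strict contradiction and avoids any reliance on tie-breaking at that step, but this is cosmetic). The multidimensional part, however, has a genuine quantitative gap. In your stable/unstable decomposition, instability in coordinate $j$ (a near-minimizer within $2B$ of optimal but at distance $>\delta$ in coordinate $j$) only forces a $\delta$-sized jump of $\sigma_j\mapsto w_{t,j}(\sigma)$ within distance $\Theta(B/\delta)$ of $\sigma_j$ --- not within $O(\mathrm{poly}(d))$ as you assert --- because the linear tilt must overcome a $2B$ objective deficit using a coordinate displacement of only $\delta$. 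With at most $O(D/\delta)$ such jumps, the per-coordinate bad set has measure $O(\eta BD/\delta^2)$, so your route yields $\mathbb{E}\|w_t-w_{t+1}\|_1 = O(\mathrm{poly}(d)\eta/\delta^2 + d\delta)$, not the claimed $O(\mathrm{poly}(d)\eta/\delta + d\delta)$. This still gives sublinear regret (of order $T^{3/4}$ after optimizing $\eta,\delta$, versus the paper's $T^{2/3}$) and hence would still prove Theorem \ref{thm:main}, but it does not prove the lemma as stated. The loss is intrinsic to your bookkeeping: you bound $\Pr[\text{large jump}]\cdot D$, which is a Markov-type relaxation of $\mathbb{E}[\text{jump}]$ and costs an extra factor of $D/\delta$.

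The paper avoids both this loss and the decoupling obstacle you flag at the end by never defining a global stability event. Instead, for each coordinate $k$ it proves the approximate monotonicity $w_{k,\min}(\sigma + 3B\delta^{-1}e_k)\ge w_{k,\max}(\sigma)-\delta$ directly from the optimality inequalities (using only the range bound $B$ on $\ell_t$, not its Lipschitz constant), then conditions on $\sigma_{-k}$ and repeats the one-dimensional change-of-variables integration over $\sigma_k$, paying $q\eta D + \delta$ with $q = 3B/\delta$ at the level of expectations. Summing over coordinates gives the stated $O(\mathrm{poly}(d)\eta/\delta + d\delta)$. If you want to salvage your decomposition, you would need to replace the probability-times-diameter bound on the unstable event by an expectation-level bound on the coordinate-wise displacement, at which point you essentially recover the paper's per-coordinate argument.
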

\begin{proof} \textbf{(of Theorem \ref{thm:main})}
We start with the multidimensional case. Applying the FTL-BTL lemma (Lemma \ref{lem:FTL-BTL}) with the regularizer $R(w)=-\sigma^\top w$ and using H\"{o}lder inequality, we obtain
\begin{align*}
\mathbb{E} [\mathrm{Regret}_T ] &\le \mathbb{E} \left [\|\sigma\|_\infty \cdot \|w^\star-w_1\|_1 \right] + G\sum_{t=1}^T \mathbb{E}[\|w_t(\sigma) - w_{t+1}(\sigma)\|_1] \\
& \le \mathbb{E} \left[\|\sigma\|_\infty \right] D + G\sum_{t=1}^T \mathbb{E}[\|w_t(\sigma) - w_{t+1}(\sigma)\|_1]~.
\end{align*}
For the multidimensional case, we use that $\mathbb{E} [\|\sigma\|_\infty] \le \eta^{-1}(\log d+1)$, $D,G \in \mathrm{poly}(d)$, and apply Lemma \ref{lem:key} to obtain
$$
\begin{aligned}
\mathbb{E}[\mathrm{Regret}_T] \le \mathrm{poly}(d) \left((\eta^{-1}(\log d + 1)  + T (\eta\delta^{-1}+\delta))\right)~.
\end{aligned}
$$
By setting $\eta = T^{-2/3}$ and $\delta = T^{-1/3}$, we obtain the regret bound $\mathbb{E}[\mathrm{Regret}_T] \le O(T^{2/3}\mathrm{poly}(d))$. Online-to-batch conversion yields a sample complexity bound of $O \left(\frac{\mathrm{poly}(d)}{\epsilon^3} \right)$. 

In the $1$-dimensional case we simply set $\eta = T^{-1/2}$ to obtain $\mathbb{E}[\mathrm{Regret}_T] = O(T^{1/2})$. This translates into a sample complexity bound of $O\left(\frac{\mathrm{poly}(d)}{\epsilon^2} \right)$.
\end{proof}

\subsection{Proof of Lemma \ref{lem:key}}
We begin with the following lemma which provides a bound on the gap between minimizers with respect to the change in noise parameter $\sigma$. 

\begin{lemma}
\label{lemma:optimalitylemma}
For any two functions $f_1,f_2:\mathcal{W} \rightarrow \mathbb{R}$ and vectors $\sigma_1, \sigma_2 \in \mathbb{R}^d$, let \[
w_i(\sigma_i) \in \arg\!\min \left\{f_i(w) - \sigma_i^{\top}w \right\},\qquad i=1,2.
\]
Letting $f=f_1-f_2$ and $\sigma=\sigma_1-\sigma_2$, we have that
\begin{equation}\label{eqn:1}
f(w_1(\sigma_1)) - f(w_2(\sigma_2)) \leq \sigma^{\top}(w_1(\sigma_1) - w_2(\sigma_2))
\end{equation}
\end{lemma}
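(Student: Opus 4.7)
The plan is to use a standard ``swap'' or ``four-point'' argument that invokes the optimality of each minimizer against the other. This type of trick appears frequently when proving stability-style bounds for perturbed minimizers, and it turns the pairing of two separate optimality inequalities into exactly the cross-term we need.

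First, I would invoke the definition of $w_1(\sigma_1)$ as a minimizer of $f_1(w) - \sigma_1^\top w$, using $w_2(\sigma_2)$ as the test point, to obtain
\begin{equation*}
f_1(w_1(\sigma_1)) - \sigma_1^\top w_1(\sigma_1) \;\le\; f_1(w_2(\sigma_2)) - \sigma_1^\top w_2(\sigma_2).
\end{equation*}
Symmetrically, using the optimality of $w_2(\sigma_2)$ for $f_2(w) - \sigma_2^\top w$ with $w_1(\sigma_1)$ as the test point,
\begin{equation*}
f_2(w_2(\sigma_2)) - \sigma_2^\top w_2(\sigma_2) \;\le\; f_2(w_1(\sigma_1)) - \sigma_2^\top w_1(\sigma_1).
\end{equation*}

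Next I would add these two inequalities, move the $f_2$ terms to the left side and the $\sigma$-terms to the right side, and recognize the resulting left side as $(f_1-f_2)(w_1(\sigma_1)) - (f_1-f_2)(w_2(\sigma_2)) = f(w_1(\sigma_1)) - f(w_2(\sigma_2))$, while the right side factors as $\sigma_1^\top(w_1(\sigma_1) - w_2(\sigma_2)) - \sigma_2^\top(w_1(\sigma_1) - w_2(\sigma_2)) = \sigma^\top(w_1(\sigma_1) - w_2(\sigma_2))$. This yields exactly \eqref{eqn:1}.

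There is no real obstacle here; the only thing one has to be careful about is the direction in which the optimality inequalities are applied, namely that each minimizer is tested against the \emph{other} minimizer (not against itself), so that after summation the ``diagonal'' terms $f_1(w_1)+f_2(w_2)$ and the ``off-diagonal'' terms $f_1(w_2)+f_2(w_1)$ line up to produce $f(w_1)-f(w_2)$. No assumption on $f_1,f_2,\mathcal{W}$ (convexity, smoothness, compactness) is used beyond the bare existence of the minimizers, which is guaranteed by the hypothesis that $w_i(\sigma_i)$ belongs to the $\arg\min$ set.
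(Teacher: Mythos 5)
Your proof is correct and is essentially identical to the paper's: both apply the optimality of each $w_i(\sigma_i)$ tested against the other minimizer, add the two inequalities, and rearrange to isolate $f(w_1(\sigma_1)) - f(w_2(\sigma_2))$ against $\sigma^\top(w_1(\sigma_1)-w_2(\sigma_2))$. Nothing further is needed.
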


\begin{proof}
Using optimality conditions for $w_i(\sigma_i)$, we have that 
\[f_1(w_1(\sigma_1)) - \sigma_1^T w_1(\sigma_1) \leq f_1(w_2(\sigma_2)) - \sigma_1^T w_2(\sigma_2) \]
\[f_2(w_2(\sigma_2)) - \sigma_2^T w_2(\sigma_2) \leq f_2(w_1(\sigma_1)) - \sigma_2^T w_1(\sigma_1) \]
Adding the above two inequalities and rearranging finishes the proof.
\end{proof}
We now provide the proof of Lemma \ref{lem:key} in the two considered cases. 
\begin{proof} \textbf{(of Lemma \ref{lem:key}: one-dimensional case)}
We wish to use Lemma \ref{lemma:optimalitylemma}. For any time $t$, consider substituting $f_1(w) = \sum_{i < t} l_i(w)$, $f_2(w) = \sum_{i < t+1} l_i(w)$, $\sigma_2 = \sigma$ and $\sigma_1 = \sigma' \triangleq \sigma + 2G$. We immediately get that 
\[ l_{t}(w_t(\sigma')) - l_{t}(w_{t+1}(\sigma)) \leq 2G(w_t(\sigma') - w_{t+1}(\sigma))\]
Using the fact that $l_{t}$ is $G$-lipschitz, we get that 
\[-G |w_t(\sigma') - w_{t+1}(\sigma)| \leq l_{t}(w_t(\sigma')) - l_{t}(w_{t+1}(\sigma)) \leq 2G(w_t(\sigma') - w_{t+1}(\sigma))\]
which immediately implies that $w_t(\sigma') \geq w_{t+1}(\sigma)$. Similar calculations show that $w_{t+1}(\sigma') \geq w_t(\sigma)$ and $w_{t}(\sigma') \ge w_t(\sigma)$.

For the rest of the proof we will omit the dependence on $t$ as it will be clear from context. We denote by $w_{\min}(\sigma) = \min \{w_t(\sigma), w_{t+1}(\sigma\} \}$, $w_{\max}(\sigma) = \max \{w_t(\sigma), w_{t+1}(\sigma)\}$. First we observe that
$$
\mathbb{E} [|w_t(\sigma) - w_{t+1}(\sigma)|] = \mathbb{E} [w_{\max}(\sigma)] -  \mathbb{E} [w_{\min}(\sigma)] ~.
$$
Secondly the computation above implies that
\begin{equation}
\label{eqn:minmaxbound_1dim}
w_{\min}(\sigma') \geq w_{\max}(\sigma)~.
\end{equation}
This powerful monotonicity property (see Figure \ref{fig:monotone}) is now used to lower bound $\mathbb{E}[w_{\min}(\sigma)]$ in terms $\mathbb{E}[w_{\max}(\sigma)]$. Letting $\sigma'=\sigma + 2G$, we have
$$
\begin{aligned}
&\mathbb{E} [w_{\min}(\sigma)]  = \int_{\sigma=0}^{2G} \eta \exp(-\eta \sigma) w_{\min}(\sigma) \,d \sigma  + \int_{\sigma>2G} \eta \exp(-\eta \sigma) w_{\min}(\sigma) \,d \sigma\\
& \ge (1-\exp(-2\eta G)) (\mathbb{E}[w_{\max}(\sigma)]-D) + \int_{\sigma>0} \eta \exp(-\eta (\sigma')) w_{\min}(\sigma')\,d\sigma\\
&  \ge (1-\exp(-2\eta G)) (\mathbb{E}[w_{\max}(\sigma)]-D) +  \int_{\sigma>0} \eta \exp(-\eta (\sigma')) w_{\max}(\sigma) \,d\sigma\\
& =  (1-\exp(-2\eta G)) (\mathbb{E}[w_{max}(\sigma)]-D) + \exp(-2\eta G) \mathbb{E} [w_{\max}(\sigma)] \\
& = \mathbb{E}[w_{\max}(\sigma)] - D(1-\exp(-2\eta G)) \ge \mathbb{E}[w_{\max}(\sigma)] - 2\eta D G~,
\end{aligned}
$$
where the second inequality uses Equation \ref{eqn:minmaxbound_1dim} and the last inequality uses the inequality $\exp(x) \ge 1+x$. 
\end{proof}
\begin{figure}
\centering
\includegraphics[width=1\textwidth]{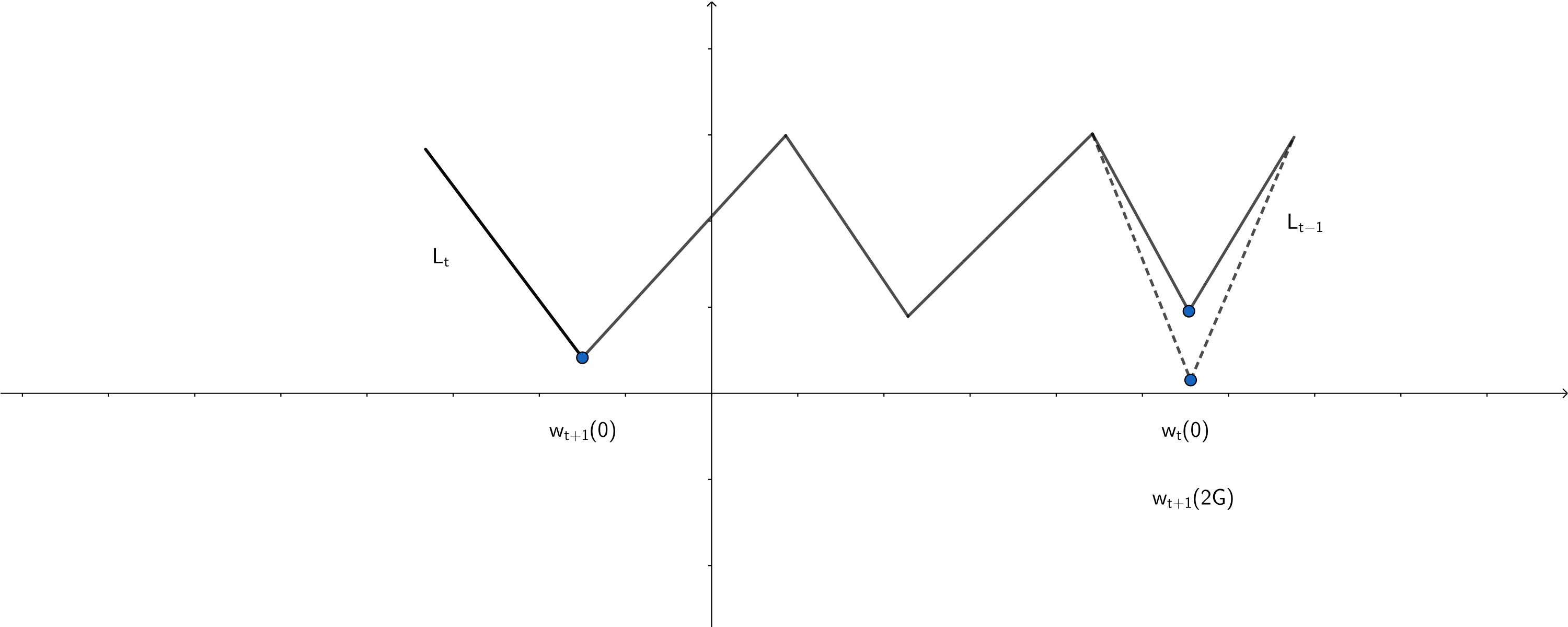}
\caption{Illustration of the monotonicity property used in the Proof of Lemma \ref{lem:key}: The unperturbed minimizer of $L_t$ (solid line), denoted $w_{t+1}(0)$, can be significantly smaller than the unperturbed minimizer of $L_{t-1}$ (dashed line), $w_t(0)$. This can be balanced by increasing the noise parameter corresponding to $w_{t+1}$.}
\label{fig:monotone}
\end{figure}

\begin{proof} \textbf{(of Lemma \ref{lem:key}:multiple dimensions)}
Once again we wish to use Lemma \ref{lemma:optimalitylemma}. For any time $t$ and any coordinate $k$, consider substituting $f_1(w) = \sum_{i < t} l_i(w)$, $f_2(w) = \sum_{i < t+1} l_i(w)$, $\sigma_2 = \sigma$ and $\sigma_1 = \sigma' \triangleq \sigma + 3B\delta^{-1} \cdot e_k$ (where $e_k$ is the $k^{th}$ vector in the canonical basis). We immediately get that 
\[ 
l_{t}(w_t(\sigma')) - l_{t}(w_{t+1}(\sigma)) \leq 3B\delta^{-1}(w_{t,k}(\sigma') - w_{t+1,k}(\sigma))~,
\]
where $w_{t,k}$ is the $k$-th coordinate of $w_t$. Using the fact that the range of $l_{t}$ is $[-B,B]$, we get that 
\[-2B \leq l_{t}(w_t(\sigma')) - l_{t}(w_{t+1}(\sigma)) \leq 3B\delta^{-1}(w_{t,k}(\sigma') - w_{t+1,k}(\sigma))\]
which immediately implies that $w_{t,k}(\sigma') \geq w_{t+1,k}(\sigma) - \delta$. A similar calculation also derives that $w_{t+1,k}(\sigma') \geq w_{t,k}(\sigma) - \delta$.

Now for any $k \in [d]$, let $w_{k,\min}(\sigma) = \min \{w_{t,k}(\sigma), w_{t+1,k}(\sigma) \}$, $w_{\max}(\sigma) = \max \{w_{t,k}(\sigma), w_{t+1,k}(\sigma)\}$. First we observe that
$$
\begin{aligned}
\mathbb{E} [\|w_t(\sigma) - w_{t+1}(\sigma)\|_1] = \sum_{k=1}^d( \mathbb{E} [w_{k,\max}(\sigma)] -  \mathbb{E} [w_{k,\min}(\sigma)]
\end{aligned}
$$
Secondly the calculation above implies that for all $k$
\begin{equation}
\label{eqn:minmaxbound_kdim}
  w_{k,\min}(\sigma + 3B\delta^{-1}e_k) \geq w_{k,\max}(\sigma ) - \delta  
\end{equation}

Now fix a coordinate $k \in [d]$ along with all noise coordinates $\sigma_j$ for $j \neq k$. Denote by $\mathbb{E}_{-k}$ the corresponding conditional expectation. Up to the additional margin term $\delta$, lower bounding $\mathbb{E}_{-k}[w_{k,\min}]$ in terms of $\mathbb{E}_{-k}[w_{k,\max}]$  reduces to the one-dimensional case; letting $q = 3B\delta^{-1}$ and $\mu(x) = \eta \exp(-\eta x)$, we have
$$
\begin{aligned}
&\mathbb{E}_{-k} [w_{k,\min}(\sigma_k)]  = \int_{\sigma_k=0}^{q} \mu(\sigma_k) w_{k,\min}(\sigma_k) \,d \sigma_k
+ \int_{\sigma_k>q} \mu(\sigma_k)w_{k,\min}(\sigma_k) \,d \sigma_k  \\
&\ge  (1-\exp(-q\eta)) (\mathbb{E}_{-k}[w_{k,\max}(\sigma_k)]-D)
+ \int_{\sigma_k>0} \mu (\sigma_k+q)w_{k,\min}(\sigma_k+q)\, d \sigma_k\\
&  \ge (1-\exp(-q\eta)) (\mathbb{E}_{-k}[w_{k,\max}(\sigma_k)]-D) + \int_{\sigma_k>0}  \mu (\sigma_k+q) (w_{k,\max}(\sigma_k)-\delta) \,d\sigma_k\\
& =  (1-\exp(-q\eta)) (\mathbb{E}_{-k}[w_{k,\max}(\sigma_k)]-D) + \exp(-q\eta) (\mathbb{E}_{-k} [w_{k,\max}(\sigma_k)] - \delta) \\
& \ge \mathbb{E}_{-k}[w_{k,\max}(\sigma_k)] - D(1-\exp(-q\eta))  - \delta \ge \mathbb{E}_{-k}[w_{k,\max}(\sigma_k)] - 3B\eta\delta^{-1}D -\delta~.
\end{aligned}
$$
The second inequality uses Equation \ref{eqn:minmaxbound_kdim} and the last inequality follows by substituting $q=3B\delta^{-1}$ and using the inequality $\exp(x) \ge 1+x$. Since the above holds for any fixed $\sigma_{-k} = (\sigma_j)_{j \neq k}$ , the unconditioned expectations also satisfy
$$
\mathbb{E}[w_{k,\min}(\sigma)] \ge \mathbb{E}[w_{k,\max}(\sigma)] - \frac{\mathrm{poly}(d) \eta}{\delta} - \delta~.
$$Summing over all coordinates we conclude the bound.
\end{proof}

\section{Implications to Non-convex Games}\label{sec:implications}
Consider the following formulation of a non-convex zero-sum game.  Let $F:\mathcal{X} \times \mathcal{Y} \rightarrow \mathbb{R}$, where $\mathcal{X},\mathcal{Y} \subseteq \mathbb{R}^d$ are compact with diameter at most $D$. The $x$-th player wishes to minimize $F$ and whereas the $y$-th player wishes to maximize $F$. We assume that for all $x\in \mathcal{X}$ and $y \in \mathcal{Y}$, both $F(\cdot,y)$ and $-F(x,\cdot)$ are $G$-Lipschitz and $B$-bounded. A known approach for achieving equilibrium is to apply (for each of the players) an online method with vanishing average regret. Precisely, on each round $t$ both players choose a pair $(x_t,y_t)$ which induces the losses $F(x_t,y_t)$ and $-F(x_t,y_t)$, respectively. Finally, we draw a random index $[j]\in [T]$ and output the pair $(\hat{x},\hat{y}) \triangleq (x_j,y_j)$. By endowing the players with
access to an offline oracle and playing according to non-convex FTPL we can reach approximate equilibrium.
\begin{theorem} \label{thm:onlineToNZ} Suppose that both the $x$-player and the $y$-player
have an access to an offline oracle and play according to non-convex
FTPL (Algorithm \ref{alg:main}). Given $\epsilon > 0$, let $T\in \mathrm{poly}(d)/\epsilon^3$ such that the expected average regret of non-convex FTPL is at most $\epsilon$. Then, $(\hat{x},\hat{y})$ forms an $\epsilon$-approximated equilibrium, i.e., for any $x \in \mathcal{X}$ and $y \in \mathcal{Y}$,
\[
\mathbb{E}[F(\hat{x},\hat{y})] \le \mathbb{E}[F(x,\hat{y})] +\epsilon,~~\mathbb{E}[F(\hat{x},\hat{y})] \ge \mathbb{E}[F(\hat{x},y)] -\epsilon~.
\]
\end{theorem}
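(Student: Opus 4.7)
}
The plan is the textbook regret-to-equilibrium reduction, with care taken because $F$ is non-convex and so we cannot invoke Jensen on empirical averages. The only ingredient is the regret guarantee of Algorithm \ref{alg:main}, already established in Theorem \ref{thm:main}: for $T = \mathrm{poly}(d)/\epsilon^3$, each of the two independently-run copies of non-convex FTPL achieves
\[
\mathbb{E}\left[\tfrac{1}{T}\sum_{t=1}^T F(x_t,y_t) - \min_{x\in\mathcal{X}} \tfrac{1}{T}\sum_{t=1}^T F(x,y_t)\right] \le \epsilon, \qquad
\mathbb{E}\left[\max_{y\in\mathcal{Y}} \tfrac{1}{T}\sum_{t=1}^T F(x_t,y) - \tfrac{1}{T}\sum_{t=1}^T F(x_t,y_t)\right] \le \epsilon,
\]
where the outer expectation is over the internal randomness of the two players (in particular the noise vectors $\sigma$).

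The key bookkeeping step is how the random index $j \sim \mathrm{Unif}[T]$ interacts with the expectation. Since $\hat x = x_j$ and $\hat y = y_j$ are coupled through the \emph{same} $j$, we have
\[
\mathbb{E}_j[F(\hat x,\hat y)] = \tfrac{1}{T}\sum_{t=1}^T F(x_t,y_t),
\]
whereas for an arbitrary fixed $x\in\mathcal{X}$, $\hat y$ alone is a uniform mixture over $y_1,\dots,y_T$, so
\[
\mathbb{E}_j[F(x,\hat y)] = \tfrac{1}{T}\sum_{t=1}^T F(x,y_t),
\]
and symmetrically for the other side. Substituting these two identities into the two regret bounds above (and using that the minimum over $x$ is no greater than $F(x,\cdot)$ averaged for any fixed $x$) gives, for every $x\in\mathcal{X}$ and $y\in\mathcal{Y}$,
\[
\mathbb{E}[F(\hat x,\hat y)] \le \mathbb{E}[F(x,\hat y)] + \epsilon, \qquad
\mathbb{E}[F(\hat x,\hat y)] \ge \mathbb{E}[F(\hat x,y)] - \epsilon,
\]
which is exactly the claimed approximate equilibrium.

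I do not expect a real obstacle here; the proof is essentially a one-line calculation once one notices the coupling of $\hat x$ and $\hat y$ through the shared index $j$. The only subtlety worth emphasizing in the write-up is that because $F$ is non-convex, one must not replace $\hat x$ or $\hat y$ by their ``means'' — the equilibrium guarantee is intrinsically over the mixed strategy $j\sim\mathrm{Unif}[T]$, and the identity $\mathbb{E}_j[F(\hat x,\hat y)] = T^{-1}\sum_t F(x_t,y_t)$ (not $F$ of the averaged iterates) is what makes the reduction go through. The $\mathrm{poly}(d)/\epsilon^3$ bound on $T$ is inherited verbatim from the regret bound proved in the previous section, so no additional tuning of $\eta$ or $\delta$ is needed.
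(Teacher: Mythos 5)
Your proposal is correct and follows essentially the same route as the paper's proof: invoke the two players' regret guarantees, and use the identities $\mathbb{E}_j[F(\hat x,\hat y)]=\frac{1}{T}\sum_t F(x_t,y_t)$ and $\mathbb{E}_j[F(x,\hat y)]=\frac{1}{T}\sum_t F(x,y_t)$ (and symmetrically) to convert them into the equilibrium inequalities. Your remark on the coupling of $\hat x$ and $\hat y$ through the shared index $j$, and on why no Jensen step is needed, is exactly the (implicit) content of the paper's one-line argument.
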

Note that the players can use their offline oracle to amplify their confidence and achieve an equilibrium with high probability. The proof is provided in the appendix.

\subsection{Implication to GANs}
In particular, we consider the case where the $x$-th player is a
\emph{generator}, who produces synthetic samples (e.g.~images), whereas the $y$-th player acts as a \emph{discriminator} by assigning scores to
samples reflecting the probability of being generated from the true
distribution. Formally, by choosing a parameter $x \in \mathcal{X}$
and drawing a random noise $z$, the $x$-th player produces a sample
denote $G_x(z)$. Conversely, the $y$-th player chooses a parameter
$y \in \mathcal{Y}$ and assign the score $D_y(G_x(z)) \in [0,1]$ to
the sample $G_x(z)$. The function $F$ usually corresponds to the
log-likelihood of mistakenly assigning an high score to a synthetic
example and vice versa. It is reasonable to assume that $F$ is
Lipschitz and bounded w.r.t. the network parameters. As a result,
efficient convergence to GANs is established by assuming an access to an
offline oracle.

\section{Discussion}
Our work establishes a computational equivalence between online and
statistical learning in the non-convex setting. We shed
light on the hardness result of \cite{Hazan} by demonstrating that
online learning is significantly more difficult than statistical
learning only when no structure is assumed.

One interesting direction for further investigation is to refine the comparison model and study the polynomial dependencies more carefully. One obvious question is to understand the gap in terms of the horizon parameter $T$ between the regret bounds for the one-dimensional and the multidimensional settings.

\section*{Acknowledgements}
We thank Karan Singh for recognizing a bug in our original proof and several discussions. We also thank Alon Cohen and Roi Livni for fruitful discussions.
Elad Hazan acknowledges funding from NSF award Number 1704860.

\bibliographystyle{plain}
\bibliography{library.bib}

\newpage

\appendix

\section{Omitted Proofs}
\begin{proof} \textbf{(of Theorem \ref{thm:onlineToNZ})}
For all $y \in \mathcal{Y}$,
\[
\begin{aligned}
& \mathbb{E}[F(\hat{x},y)] = \mathbb{E}  \left [\frac{1}{T} \sum F(x_t,y) \right] \le \mathbb{E}  \left [\frac{1}{T} \sum F(x_t,y_t) \right]+\epsilon~\\
& \Rightarrow \mathbb{E}[F(\hat{x},\hat{y})] \le  \left [\frac{1}{T} \sum F(x_t,y_t) \right]+\epsilon
\end{aligned}
\]
Similarly, for all $x \in \mathcal{X}$,
\[
\begin{aligned}
&\mathbb{E}[F(x,\hat{y})] = \mathbb{E}  \left [\frac{1}{T} \sum F(x,y_t) \right] \ge \mathbb{E}  \left [\frac{1}{T} \sum F(x_t,y_t) \right]-\epsilon~\\
&\Rightarrow   \mathbb{E}[F(\hat{x},\hat{y})] \ge  \left [\frac{1}{T} \sum F(x_t,y_t) \right]-\epsilon~.
\end{aligned}
\]
\end{proof}

\end{document}